\theoremstyle{plain}
\newtheorem{theorem}{Theorem}[section]
\newtheorem{lemma}[theorem]{Lemma}
\theoremstyle{definition}
\theoremstyle{remark}
\icmltitlerunning{Robust Task Representations for Offline Meta-Reinforcement Learning via Contrastive Learning}
\begin{document}

\twocolumn[
\icmltitle{Robust Task Representations for Offline Meta-Reinforcement Learning \\ via Contrastive Learning}



\icmlsetsymbol{equal}{*}

\begin{icmlauthorlist}
\icmlauthor{Haoqi Yuan}{yyy}
\icmlauthor{Zongqing Lu}{yyy}
\end{icmlauthorlist}

\icmlaffiliation{yyy}{School of Computer Science, Peking University}

\icmlcorrespondingauthor{Zongqing Lu}{zongqing.lu@pku.edu.cn}

\icmlkeywords{Machine Learning, ICML}

\vskip 0.3in
]



\printAffiliationsAndNotice{} 

\begin{abstract}
We study offline meta-reinforcement learning, a practical reinforcement learning paradigm that learns from offline data to adapt to new tasks. The distribution of offline data is determined jointly by the behavior policy and the task. Existing offline meta-reinforcement learning algorithms cannot distinguish these factors, making task representations unstable to the change of behavior policies. To address this problem, we propose a contrastive learning framework for task representations that are \textit{robust} to the distribution mismatch of behavior policies in training and test. We design a bi-level encoder structure, use mutual information maximization to formalize task representation learning, derive a contrastive learning objective, and introduce several approaches to approximate the true distribution of negative pairs. Experiments on a variety of offline meta-reinforcement learning benchmarks demonstrate the advantages of our method over prior methods, especially on the generalization to out-of-distribution behavior policies. 
\end{abstract}

\section{Introduction}
\label{introduction}

Deep reinforcement learning (RL) has achieved great successes in playing video games \cite{atari-drl,fps-drl}, robotics \cite{robot-drl}, recommendation systems \cite{recommend-drl} and multi-agent systems \cite{cooperative-marl}. However, deep RL still faces two challenging problems: data efficiency and generalization. To play Atari games, model-free RL takes millions of steps of environment interactions, while model-based RL takes 100k environment steps \cite{modelbased-atari}. The tremendous interactions along with safety issues prevent its applications in many real-world scenarios. Offline reinforcement learning \cite{offline-review} tackles the sample efficiency problem by learning from pre-collected offline datasets, without any online interaction. On the other hand, as deep RL is supposed to be tested in the same environment to the training environment, when the transition dynamics or the reward function changes, the performance degenerates. To tackle the generalization problem, meta-reinforcement learning \cite{maml,rl2} introduces learning over task distribution to adapt to new tasks.

Offline Meta-Reinforcement Learning (OMRL), an understudied problem, lies in the intersection of offline RL and meta-RL. Usually, the offline dataset is collected from multiple tasks by different behavior policies. The training agent aims at learning a meta-policy, which is able to efficiently adapt to unseen tasks. Recent studies \cite{mbml,borel,improved-context,focal} extend context-based meta-RL to OMRL. They propose to use a context encoder to learn task representations from the collected trajectories, then use the latent codes as the policy's input. 

However, context-based methods are vulnerable to the distribution mismatch of behavior policies in training and test phases. The distribution of collected trajectories depends both on the behavior policy and the task. When behavior policies are highly correlated with tasks in the training dataset, the context encoder is likely to memorize the feature of behavior policies. Thus, in the test phase, the context encoder produces biased task inference due to the change of the behavior policy. Although \citet{focal, improved-context} employed contrastive learning to improve task representations, their learning objectives are simply based on discriminating trajectories from different tasks, and thus cannot eliminate the influence of behavior policies.

To overcome this limitation, we propose a novel framework of \textbf{CO}ntrastive \textbf{R}obust task \textbf{R}epresentation learning for \textbf{O}MRL (\textbf{CORRO}). We design a bi-level structured task encoder, where the first level extracts task representations from one-step transition tuples instead of trajectories and the second level aggregates the representations. We formalize the learning objective as mutual information maximization between the representation and task, to maximally eliminate the influence of behavior policies from task representations. We introduce a contrastive learning method to optimize for InfoNCE, a mutual information lower bound. To approximate the negative pairs' distribution, we introduce two approaches for negative pairs generation, including generative modeling and reward randomization. Experiments in Point-Robot environment and multi-task MuJoCo benchmarks demonstrate the substantial performance gain of CORRO over prior context-based OMRL methods, especially when the behavior policy for adaptation is out-of-distribution.

\textbf{Our main contributions, among others, are: }
\begin{itemize}
    \vspace{-0.30cm}
	\setlength\itemsep{2pt}
    \item We propose a framework for learning robust task representations with \textit{fully} offline datasets, which can distinguish tasks from the distributions of transitions jointly determined by the behavior policy and task. 
    \item We derive a contrastive learning objective to extract shared features in the transitions of the same task while capturing the essential variance of reward functions and transition dynamics across tasks. 
    \item We empirically show on a variety of benchmarks that our method much better generalizes to out-of-distribution behavior policies than prior methods, even better than \textit{supervised task learning} that assumes the ground-truth task descriptions.  
\end{itemize}

\section{Related Work}

\textbf{Offline Reinforcement Learning} allows policy learning from data collected by arbitrary policies, increasing the sample efficiency of RL. In off-policy RL \cite{atari-drl, ddqn, sac, ddpg, awr}, the policy reuses the data collected by its past versions, and learns based on Q-learning or importance sampling. Batch RL studies learning from fully offline data. Recent works \cite{batch-rl, bcq, offline-review, behavior-regularize, benchmark-batch-rl, bootstrap-offline-q} propose methods to overcome distributional shift and value overestimation issues in batch RL. Our study follows the batch RL setting, but focuses on learning task representations and meta-policy from offline multi-task data.

\textbf{Meta-Reinforcement Learning} learns to quickly adapt to new tasks via training on a task distribution. Context-based methods \cite{rl2,varibad, pearl, meta-q, concurrent-meta-rl} formalize meta-RL as POMDP, regard tasks as unobservable parts of states, and encode task information from history trajectories. Optimization-based methods \cite{maml, promp, emaml, continuous-adapt, dice, evolved-pg} formalize task adaptation as performing policy gradients over few-shot samples and learn an optimal policy initialization. Our study is based on the framework of context-based meta-RL.

\textbf{Offline Meta-Reinforcement Learning} studies learning to learn from offline data. Because there is a distributional mismatch between offline data and online explored data during test, learning robust task representations is an important issue. Recent works \cite{focal, improved-context} apply contrastive learning over trajectories for compact representations, but ignore the influence of behavior policy mismatch. To fix the distributional mismatch, \citet{mbml, borel} assume \textit{known} reward functions of different tasks, and \citet{omrl-online} require additional online exploration. Unlike them, we consider learning robust task representations with fully offline data.

\textbf{Contrastive Learning} \cite{moco, byol,alignment-and-uniform} is a popular method for self-supervised representation learning. It constructs positive or negative pairs as noisy versions of samples with the same or different semantics. Via distinguishing the positive pair among a large batch of pairs, it extracts meaningful features. Contrastive learning has been widely applied in computer vision \cite{simclr, moco, generalized-data-transform}, multi-modal learning \cite{tcn, contrastive-multiview, p4contrast} and image-based reinforcement learning \cite{contrastive-atari, decouple-state-representation, curl}. In our study, we apply contrastive learning in offline task representation learning, for robust OMRL.

\section{Preliminaries}
\subsection{Problem Formulation}

A task for reinforcement learning is formalized as a fully observable Markov Decision Process (MDP). MDP is modeled as a tuple $M=(\mathcal{S}, \mathcal{A}, T, \rho, R, \gamma)$, where $\mathcal{S}$ is the state space, $\mathcal{A}$ is the action space, $T(s'|s,a)$ is the transition dynamics of the environment, $\rho(s)$ is the initial state distribution, $R(s,a)$ is the reward function, $\gamma\in[0,1)$ is the factor discounting the future reward. The policy of the agent is a distribution $\pi(a|s)$ over actions. Starting from the initial state, for each time step, the agent performs an action sampled from $\pi$, then the environment updates the state with $T$ and returns a reward with $R$. We denote the marginal state distribution at time $t$ as $\mu_\pi^t(s)$.  The objective of the agent is to maximize the expected cumulative rewards $\max_\pi J_M(\pi)=\mathbb{E}_{s_t\sim \mu_\pi^t, a_t\sim \pi}{[\sum_{t=0}^{\infty}{\gamma^t R(s_t,a_t)}]}$. To evaluate and optimize the policy, we define V-function and Q-function as follows:
\vspace{-0.1cm}
\begin{gather}
    V_\pi(s)=\sum_{t=0}^\infty{\gamma^t \mathbb{E}_{s_t\sim \mu_\pi^t, a_t\sim \pi} \left[R(s_t,a_t)\right]} \\
    Q_\pi(s,a) = R(s,a)+\gamma\mathbb{E}_{s'\sim T(s'|s,a)}{[V_\pi(s')]}.
\end{gather}
Q-learning solves for the optimal policy by iterating the Bellman optimality operator $\mathcal{B}$ over Q-function:
\begin{equation}
    \mathcal{B}\hat{Q}(s,a)=R(s,a)+\gamma\mathbb{E}_{s'\sim T(s'|s,a)}{\left[\max_{a'}{\hat{Q}(s',a')}\right]}.
\end{equation}
In offline meta-reinforcement learning (OMRL), we assume that the task follows a distribution $M_i=(\mathcal{S}, \mathcal{A}, T_i, \rho, R_i, \gamma) \sim P(M) $. Tasks share the same state space and action space, but varies in reward functions and transition dynamics. Thus, we can also denote the task distribution as $P(R,T)$. 
Given $N$ training tasks $\{M_i\}_{i=1}^{N}$, for each task $i$, an offline dataset $X_i=\{(s_{i,j},a_{i,j},r_{i,j},s'_{i,j})\}_{j=1}^K$ is collected by arbitrary behavior policy $\pi_\beta^i$. The learning algorithms can only access the offline datasets to train a meta-policy $\pi_{\operatorname{meta}}$, without any environmental interactions. At test time, given an unseen task $M \sim P(M)$, an arbitrary exploration (behavior) policy collects a context $c=\{(s_j,a_j,r_j,s'_j)\}_{j=1}^k$, then the learned agent performs task adaptation conditioned on $c$ to get a task-specific policy $\pi_M$ and evaluate in the environment. The objective for OMRL is to learn a meta policy maximize the expected return over the test tasks:
\vspace{-0.1cm}
\begin{equation}
    J(\pi_{\operatorname{meta}})=\mathbb{E}_{M\sim P(M)}{\left[J_M(\pi_M)\right].}
\end{equation}

\subsection{Context-Based OMRL and FOCAL}

Context-based learning regards OMRL as solving partially observable MDPs. Considering the task $M$ as the unobservable part of the state, the agent gathers task information and makes decisions upon the history trajectory: $a_t\sim \pi(a|\tau_{0:t-1}, s_t)$, where $\tau_{0:t-1}=(s_0,a_0,r_0,\cdots,s_{t-1},a_{t-1},r_{t-1})$. 

A major assumption in context-based learning is that different tasks share some common structures and the variation over tasks can be described by a compact representation. For example, in a robotic object manipulation scenario, tasks can be described as object mass, friction, and target position, while rules of physical interaction are shared across tasks. Context-based OMRL uses a task encoder to learn a latent space $z_t=E(\tau_{0:t-1})$, to represent task information \cite{pearl,focal} or task uncertainty \cite{borel}. The policy $\pi(a|s,z)$ is conditioned on the latent task representation. Given a new task, the exploration policy collects a few trajectories (context), and the task encoder adapts the policy by producing $z$. In principle, the task encoder and the policy can be trained on offline datasets with off-policy or batch RL methods. \textit{In our setting, we assume that the exploration policy is arbitrary, and the context is a single trajectory.} During training, the context is sampled from the offline dataset.

FOCAL \cite{focal} improves task representation learning of the encoder via distance metric learning. The encoder minimizes the distance of trajectories from the same task and pushes away trajectories from different tasks in the latent space. The loss function is
\begin{equation}
\begin{aligned}
    \mathcal{L}_{dml}=& \boldsymbol{1}\{y_i=y_j\} \| q_i-q_j\|_2^2 + \\
    &\boldsymbol{1}\{y_i\neq y_j\} \beta\cdot \frac{1}{\| q_i-q_j\|_2^n + \epsilon} 
\end{aligned}
\end{equation}
where $q_i=E(\tau_i)$ is the representation of a trajectory, $y_i$ is the identifier of the offline dataset. The policy is separately trained, conditioned on the learned encoder, with batch Q-learning methods.

\subsection{Task Representation Problems in OMRL}
Since the offline datasets are collected by different behavior policies, to answer which dataset a trajectory belongs to, one may infer based on the feature of behavior policy rather than rewards and state transitions. 
As an example: In a 2D goal reaching environment, tasks differ in goal positions. In each training task, the behavior policy is to go towards the goal position. Training on such datasets, the task encoder in FOCAL can simply ignore the rewards and distinguish the tasks based on the state-action distribution in the context. Thus, it will make mistakes when the context exploration policy changes.

Similar problem is also mentioned in \citet{borel, mbml}. To eliminate the effects of behavior policies in task learning, \citet{borel} augment the datasets by collecting trajectories in different tasks with the same policy, and relabeling the transition tuples with different tasks' reward functions. However, in our fully offline setting, collecting additional data and accessing the reward functions are not allowed. \citet{mbml} use the learned reward functions in different tasks to relabel the trajectories and apply metric learning over trajectories. This approach does not support the settings where tasks differ in transition dynamics, because we cannot perform transition relabeling of $s'$ over the trajectory to mimic the trajectory in other tasks. Also, the reward function learned with limited single-task dataset can be inaccurate to relabel the unseen state-action pairs.

\section{Method}
To address the offline task representation problem, we propose CORRO, a novel contrastive learning framework for robust task representations with fully offline datasets, which decreases the influence of behavior policies on task representations while supporting tasks that differ in reward function and transition dynamics.

\begin{figure}[!t]
\centering
\includegraphics[scale=0.4, trim={0cm, 7.5cm, 13.4cm, 0cm}, clip]{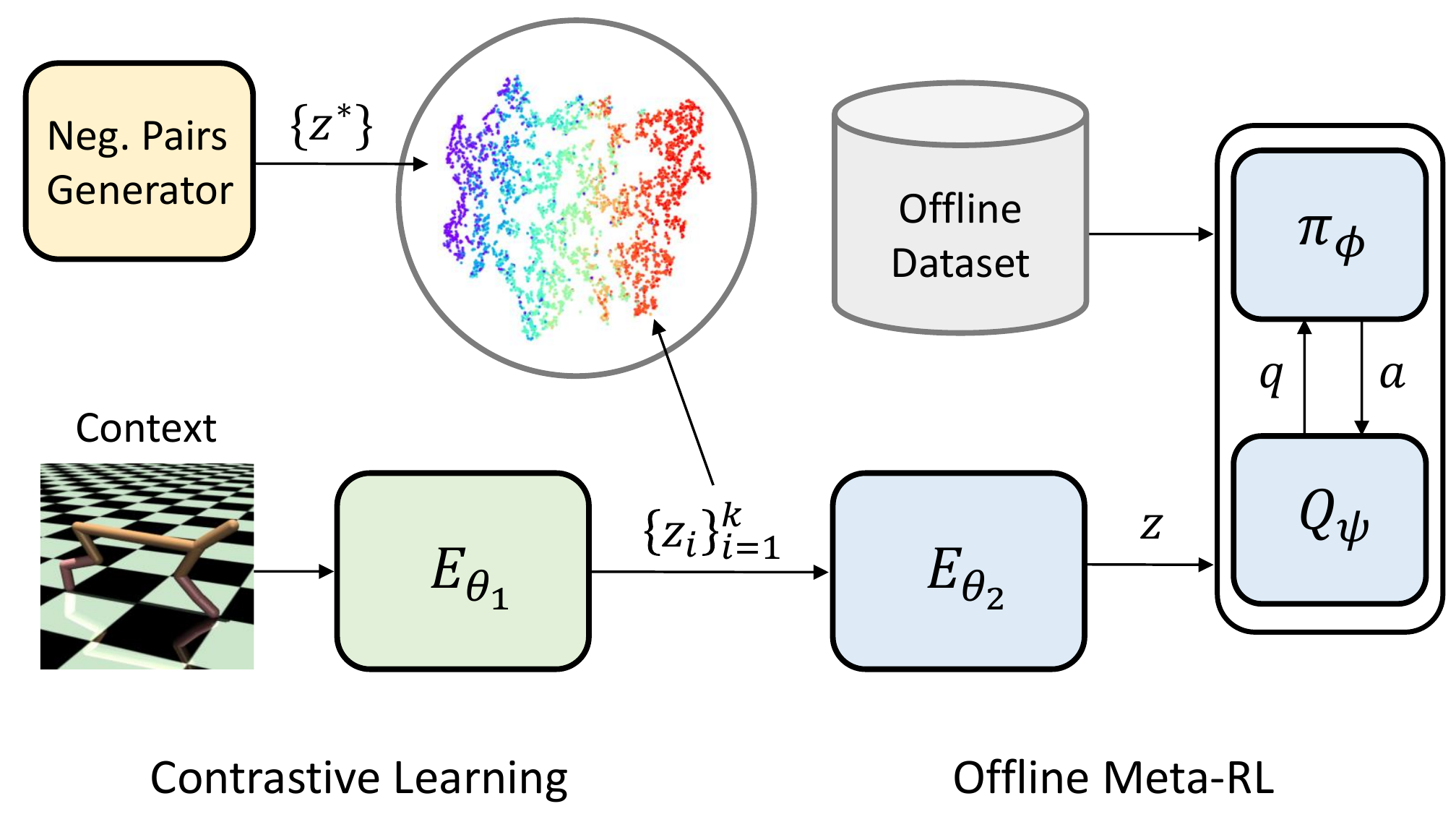}
\caption{CORRO framework. The transition encoder $E_{\theta_1}$ extracts the latent representations of each transition tuple in the context. Contrastive learning is applied to robustify the task encoder. The aggregator $E_{\theta_2}$ gathers all the latent codes, to condition the actor and critic.}
\label{fig:framework}
\end{figure}

\subsection{Bi-Level Task Encoder}
We first design a bi-level task encoder where representation learning focuses on transition tuples rather than trajectories. Concretely, our task encoder consists of a transition encoder $E_{\theta_1}$ and an aggregator $E_{\theta_2}$, parameterized collectively as $\theta$. Given a context $c=\{(s_i,a_i,r_i,s_i')\}_{i=1}^k$, the transition encoder extracts latent representations for all the transition tuples $z_i = E_{\theta_1}(s_i,a_i,r_i,s_i')$, then the aggregator gather all the latent codes into a task representation $z=E_{\theta_2}(\{z_i\}_{i=1}^k)$. The Q-function $Q_\psi(s,a,z)$ and the policy $\pi_\phi(a|s,z)$ are conditioned on $z$, parameterized with $\phi, \psi$.

Compared to trajectories, transition tuples leak less behavior policy information and are suitable for both transition relabeling and reward relabeling. Thus, we design robust task representation learning methods to train only the transition encoder in the following sections. The architecture of the transition encoder is simple MLP. The aggregator is trained with Q-function and policy using offline RL algorithms. Intuitively, it should draw attention to the transition tuples with rich task information. Inspired by self-attention \cite{attention}, we introduce an aggregator which computes the weighted sum of the input vectors:
\begin{equation}
    z = \sum_{j=1}^k{\operatorname{softmax}\left(\{\operatorname{MLP}(z_i)\}_{i=1}^k\right)_j \cdot z_j}.
\end{equation}
Figure \ref{fig:framework} gives an overview of our framework.

\subsection{Contrastive Task Representation Learning}
\label{sec:contrastive}


An ideally robust task representation should be dependent on tasks and invariant across behavior policies. We introduce mutual information to measure the mutual dependency between the task representation and the task. As, intuitively, mutual information measures the uncertainty reduction of one random variable when the other one is observed, we propose to maximize the mutual information between the task representation and the task itself, so that the task encoder learns to maximally reduce task uncertainty while minimally preserving task-irrelevant information.

Mathematically, we formalize the transition encoder $E_{\theta_1}$ as a probabilistic encoder $z\sim P(z|x)$, where $x=(s,a,r,s')$ denotes the transition tuple. Task $M$ follows the task distribution $P(M)$ and the distribution of $x$ is determined jointly by $M$ and the behavior policy. The learning objective for the transition encoder is:
\begin{equation}
\label{eq:mutual}
    \max I(z;M)=\mathbb{E}_{z,M}{\left[\log{\frac{p(M|z)}{p(M)}}\right]}.
\end{equation}

Optimizing mutual information is intractable in practice. Inspired by noise contrastive estimation (InfoNCE) in the literature of contrastive learning \cite{cpc}, we derive a lower bound of Eq. \eqref{eq:mutual} and have the following theorem.

\begin{theorem}
\label{thm:nce}
Let $\mathcal{M}$ be a set of tasks following the task distribution, $|\mathcal{M}|=N$. $M\in\mathcal{M}$ is the first task with reward and transition $R,T$. Let $x=(s,a,r,s'), z\sim P(z|x), h(x,z)=\frac{P(z|x)}{P(z)}$, where $(s,a)$ follows an arbitrary distribution, $r=R(s,a), s'\sim T(s'|s,a)$. For any task $M^*\in\mathcal{M}$ with reward and transition $R^*,T^*$, denote $x^*=(s,a,r^*,{s^*}')$ as a transition tuple generated in $M^*$ conditioned on $(s,a)$, where $r^*=R^*(s,a), {s^*}'\sim T^*(s'|s,a)$. Then we have 
\begin{equation*}
    I(z;M)-\log(N)\ge \mathbb{E}_{\mathcal{M}, x, z}{\left[\log(\frac{h(x,z)}{\sum_{M^*\in\mathcal{M}}{h(x^*,z)}})\right]}.
\end{equation*}
\end{theorem}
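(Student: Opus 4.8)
The plan is to read the right-hand side as the (negative) InfoNCE loss and to prove the inequality through the noise-contrastive estimation argument of \citet{cpc}. The first step is to recognize that $h(x,z)=\frac{P(z|x)}{P(z)}=\frac{P(x|z)}{P(x)}$ is the Bayes-optimal score for the following game: given $z$ together with the $N$ candidate tuples $\{x^*\}_{M^*\in\mathcal{M}}$ — one per task, all sharing the same $(s,a)$, exactly one of which (the tuple from $M$) generated $z$ — decide which candidate is the positive one. Writing the posterior probability that the candidate from $M$ is the positive one and cancelling the common factor $P(z)$ gives exactly $\frac{h(x,z)}{\sum_{M^*}h(x^*,z)}$, so the right-hand side is the expected log-probability that the optimal classifier succeeds, and our task is to show this cannot exceed $I(z;M)-\log N$.

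For the derivation itself I would rely on the multi-sample variational lower bound on mutual information, which holds for \emph{any} positive critic and becomes tight at the density ratio; its value is $\log N+\mathbb{E}\big[\log\frac{f_1}{\sum_i f_i}\big]$, so choosing the critic equal to $h$ reproduces the claimed form once we identify the target information quantity. A useful simplification is that the per-candidate normalization cancels in the softmax-type ratio: since all candidates share $(s,a)$, replacing $h(x,z)=\frac{P(z|x)}{P(z)}$ by $\frac{P(z|s,a,r,s')}{P(z|s,a)}$ leaves $\frac{h(x,z)}{\sum_{M^*}h(x^*,z)}$ unchanged, and this conditional ratio is exactly the optimal critic for the task-relevant coordinates $(r,s')$ at fixed $(s,a)$. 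Equivalently, following \citet{cpc} directly, I would rewrite $-\text{RHS}=\mathbb{E}\big[\log\big(1+\frac{1}{h(x,z)}\sum_{M^*\neq M}h(x^*,z)\big)\big]$, use that a single negative score has conditional mean $\mathbb{E}_{M^*,x^*}[h(x^*,z)|s,a,z]=\frac{P(z|s,a)}{P(z)}$ (averaging $P(z|x^*)$ over the task distribution at fixed $(s,a)$), and then apply $\log(1+u)\ge\log u$ to expose the surviving term $\mathbb{E}_{x,z}\big[\log\frac{P(z|s,a,r,s')}{P(z|s,a)}\big]$ together with the additive $\log N$.

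The step I expect to be the real obstacle is the final identification of that surviving density-ratio term with $I(z;M)$. What the construction hands us directly is the conditional information $I\big(z;(r,s')\,|\,s,a\big)$ between the encoding and the task-dependent coordinates of the transition; because the candidates differ only through $(r,s')$ while $(s,a)$ is held fixed, and because $z$ depends on the task only through $x=(s,a,r,s')$, the relevant Markov chain is $M\to x\to z$. Data processing then controls these quantities in the wrong direction for a literal equality, so closing the gap requires leaning on the encoder's intended invariance — namely that a robust $z$ retains the reward/dynamics signature of $M$ but discards behavior-policy and state-visitation artifacts, under which the conditional transition information coincides with $I(z;M)$. Making the negative-sample average rigorous is the secondary difficulty: treating the finite sum by its mean is exact only as $N\to\infty$ under a Jensen step that points the wrong way, so I would instead keep the $\frac{1}{N}\sum_i$ form of the multi-sample bound, where the constant $\log N$ appears exactly and no approximation is needed.
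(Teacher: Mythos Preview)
Your proposal contains a genuine gap that you yourself flag: you arrive at the conditional quantity $I\big(z;(r,s')\,|\,s,a\big)$ (or equivalently $\mathbb{E}_{x,z}\big[\log\frac{P(z|x)}{P(z|s,a)}\big]$) and then cannot identify it with $I(z;M)$ without appealing to an ``encoder invariance'' assumption that is nowhere in the theorem statement. That appeal is not a proof, and the data-processing inequality indeed points the wrong way here, so the argument as written does not close.

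The missing idea is the simple lemma the paper proves first: from the Markov chain $M\to x\to z$ (which you do mention) one has $P(z|M)=\mathbb{E}_{x|M}[P(z|x)]$, and hence
\[
\frac{P(M|z)}{P(M)}=\frac{P(z|M)}{P(z)}=\mathbb{E}_{x}\!\left[\frac{P(z|x)}{P(z)}\right]=\mathbb{E}_{x}[h(x,z)].
\]
This identity lets you \emph{start} from $I(z;M)=\mathbb{E}_{z,M}\big[\log\frac{P(M|z)}{P(M)}\big]$ and work downward: apply Jensen to pull the log through $\mathbb{E}_x$, subtract $\log N$, and then rewrite the denominator by inserting the negative-sample average $\mathbb{E}_{M^*}[h(x^*,z)]$ exactly as you planned. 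Because the derivation descends from $I(z;M)$ rather than ascending from a transition-level information quantity, the identification problem never arises. Your computation $\mathbb{E}_{M^*,x^*}[h(x^*,z)\mid s,a,z]=\frac{P(z|s,a)}{P(z)}$ is correct and is precisely the piece that, combined with the lemma, makes the descent work; you simply need to run the argument in the other direction.
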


We give a proof of the theorem in Appendix \ref{appendix:thm}. Following InfoNCE, we approximate $h$ with the exponential of a score function $S(z^*,z)$, which is a similarity measure between the latent codes of two samples. We derive a sampling version of the tractable lower bound to be the transition encoder's learning objective
\begin{equation}
\label{eq:contrastive_loss}
    \max_{{\theta_1}}\sum_{\substack{M_i\in\mathcal{M} \\ x,x'\in X_i}}{\left[\log\left(\frac{\exp(S(z,z'))}{\sum_{M^*\in\mathcal{M}}{\exp(S(z,z^*))}}\right)\right]},
\end{equation}
where $\mathcal{M}$ is the set of training tasks, $x,x'$ are two transition tuples sampled from dataset $X_i$ and $z,z'$ are latent representations of $x,x'$. For tasks $M^*\in\mathcal{M}\setminus\{M_i\}$, $z^*$ is the representation of the transition tuple $x^*$ sampled in task $M^*$ conditioned on the same state-action pair in $x$. For $M^*=M_i$, we define $z^*=z'$ for consistent notation. We use cosine similarity for the score function in practice.

Following the literature, we name $(x,x')$ a positive pair and name $\{(x,x^*)\}_{M^*\in\mathcal{M}\setminus\{M\}}$  negative pairs. Eq. \eqref{eq:contrastive_loss} optimizes for a $N$-way classification loss to classify the positive pair out of all the pairs. To maximize the score of positive pairs, the transition encoder should extract shared features in the transitions of the same task. To decrease the score of negative pairs, the transition encoder should capture the essential variance of rewards and state transitions since state-action pairs are the same across tasks.

\subsection{Negative Pairs Generation}

Generating negative pairs in Eq. \eqref{eq:contrastive_loss} involves computing $(r,s')$ with tasks' reward functions and transition dynamics, which is impossible in the fully offline setting. \citet{mbml} introduce a reward relabeling method that fits a reward model for each task using offline datasets. In our setting, we also need to fit transition models with higher dimensions. However, the dataset for each task is usually small, making reward models and transition models overfit. When the overlap of state-action pairs between different tasks is small, relabeling with separately learned models can be inaccurate, making contrastive learning ineffective.

To generate high-quality negative samples, we design methods based on two major principles.

\textbf{Fidelity}: To ensure the task encoder learning on real transition tuples, the distribution of generated negative pairs should approximate the true distribution in the denominator of Eq. \eqref{eq:contrastive_loss}. Concretely, given $(s,a)$, the true distribution of $(r,s')$ in negative pairs is 
\begin{equation}
\label{eq:neg-distribution}
    p(r,s'|s,a)\propto\mathbb{E}_{M\sim P(M)}{\left[T(s'|s,a)\boldsymbol{1}\{R(s,a)=r\}\right]}
\end{equation}
\textbf{Diversity}: According to prior works \cite{generalized-data-transform, simclr}, the performance of contrastive learning depends on the diversity of negative pairs. More diversity increases the difficulty in optimizing InfoNCE and helps the encoder to learn meaningful representations. In image domains, large datasets, large negative batch sizes, and data augmentations are used to increase diversity. We adopt this principle in generating negative pairs of transitions.

We propose two approaches for negative pairs generation.

\textbf{Generative Modeling}: We observe that, though a state-action pair may not appear in all the datasets, it usually appears in a subset of training datasets. Fitting reward and transition models with larger datasets across tasks can reduce the prediction error. Thus, we propose to train a generative model over the union of training datasets $\cup_{i=1}^{N}{X_i}$, to approximate the offline data distribution $P_{\{X_i\}}(r,s'|s,a)$. When the distributions of $(s,a)$ are similar across tasks, this distribution approximates the distribution in Eq. \eqref{eq:neg-distribution}. 

We adopt conditional VAE (CVAE) \cite{cvae} for generative modeling. CVAE consists of a generator $p_\xi(r,s'|s,a,z)$ and a probabilistic encoder $q_\omega(z|s,a,r,s')$. The latent vector $z$ describes the uncertain factors for prediction, follows a prior Gaussian distribution $p(z)$. CVAE is trained to minimize the loss function 
\begin{equation}
\label{eq:cvae}
\begin{aligned}
    & \mathcal{L}_{\operatorname{CVAE}} =  -\mathbb{E}_{(s,a,r,s')\in \{X_i\}} \Big[ \mathbb{E}_{q_\omega(z|s,a,r,s')}\\
    & \ {[\log p_\xi(r,s'|s,a,z)]} - \operatorname{KL}[q_\omega(z|s,a,r,s')\|p(z)] \Big],
\end{aligned}
\end{equation}
where $\operatorname{KL}(\cdot \| \cdot)$ is KL-divergence, working as an information bottleneck to minimize the information in $z$.

\begin{algorithm}[t]
  \caption{Meta Training}
  \label{alg:train}
\begin{algorithmic}
  \STATE {\bfseries Input:} Datasets $\{X_i\}_{i=1}^N$; OMRL models $E_{\theta_1}$, $E_{\theta_2}$, $Q_\psi$, $\pi_\phi$
  \STATE {\bfseries A. If use generative modeling, pre-train CVAE:}
  \STATE Initialize CVAE $q_\omega, p_\xi$
  \REPEAT
  \STATE Update $\omega, \xi$ to minimize Eq. \eqref{eq:cvae}.
  \UNTIL{Done}
  \STATE {\bfseries B. Train the transition encoder:}
  \REPEAT
  \STATE Sample a task $M$ and two transition tuples $x,x'$
  \STATE $z=E_{\theta_1}(x),z'=E_{\theta_1}(x')$
  \FOR{$M^*\in\mathcal{M}$}
  \IF{use generative modeling}
  \STATE Sample $x^*$ from CVAE
  \ELSIF{use reward randomization}
  \STATE Add noise to the reward to get $x^*$
  \ENDIF
  \STATE $z^*=E_{\theta_1}(x^*)$
  \ENDFOR
  \STATE Compute Eq. \eqref{eq:contrastive_loss}
  \STATE Update $\theta_1$ to maximize Eq. \eqref{eq:contrastive_loss}
  \UNTIL{Done}
  \STATE {\bfseries C. Train the policy:}
  \REPEAT
  \STATE Sample a task dataset $X$ and a context $c$
  \STATE $z=E_{\theta_2}(E_{\theta_1}(c))$
  \STATE Augment the states in $X$ with z
  \STATE Update $\theta_2,\psi,\phi$ with offline RL algorithms on $X$
  \UNTIL{Done}
\end{algorithmic}
\end{algorithm}

\begin{algorithm}[t]
  \caption{Meta Test}
  \label{alg:test}
\begin{algorithmic}
  \STATE {\bfseries Input:} Trained models $E_{\theta_1}$, $E_{\theta_2}$, $Q_\psi$, $\pi_\phi$
  \STATE Sample a task $M$
  \STATE Collect a context trajectory $c$ with an arbitrary policy
  \STATE $z=E_{\theta_2}(E_{\theta_1}(c))$
  \REPEAT
  \STATE Observe $s$, execute $a\sim\pi_\phi(a|s,z)$, get $r$
  \UNTIL{Environment terminates}
\end{algorithmic}
\end{algorithm}


\begin{figure*}[!t]
	\centering
	\begin{minipage}[t]{0.33\textwidth}
		\centering
		\includegraphics[scale=0.128, trim={0cm, 0cm, 2cm, 0cm}]{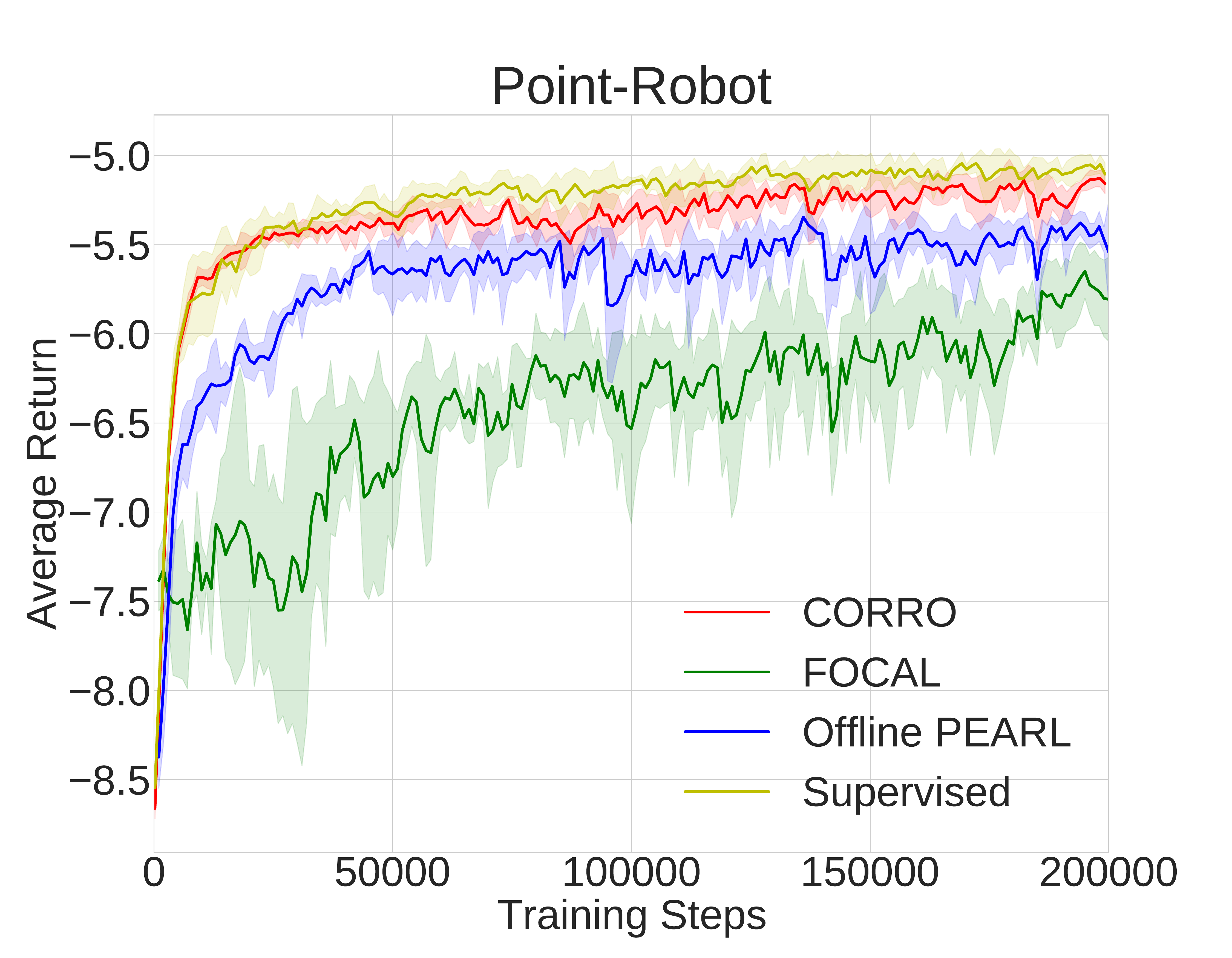}
	\end{minipage}
	\begin{minipage}[t]{0.33\textwidth}
		\centering
		\includegraphics[scale=0.128, trim={0cm, 0cm, 2cm, 0cm}]{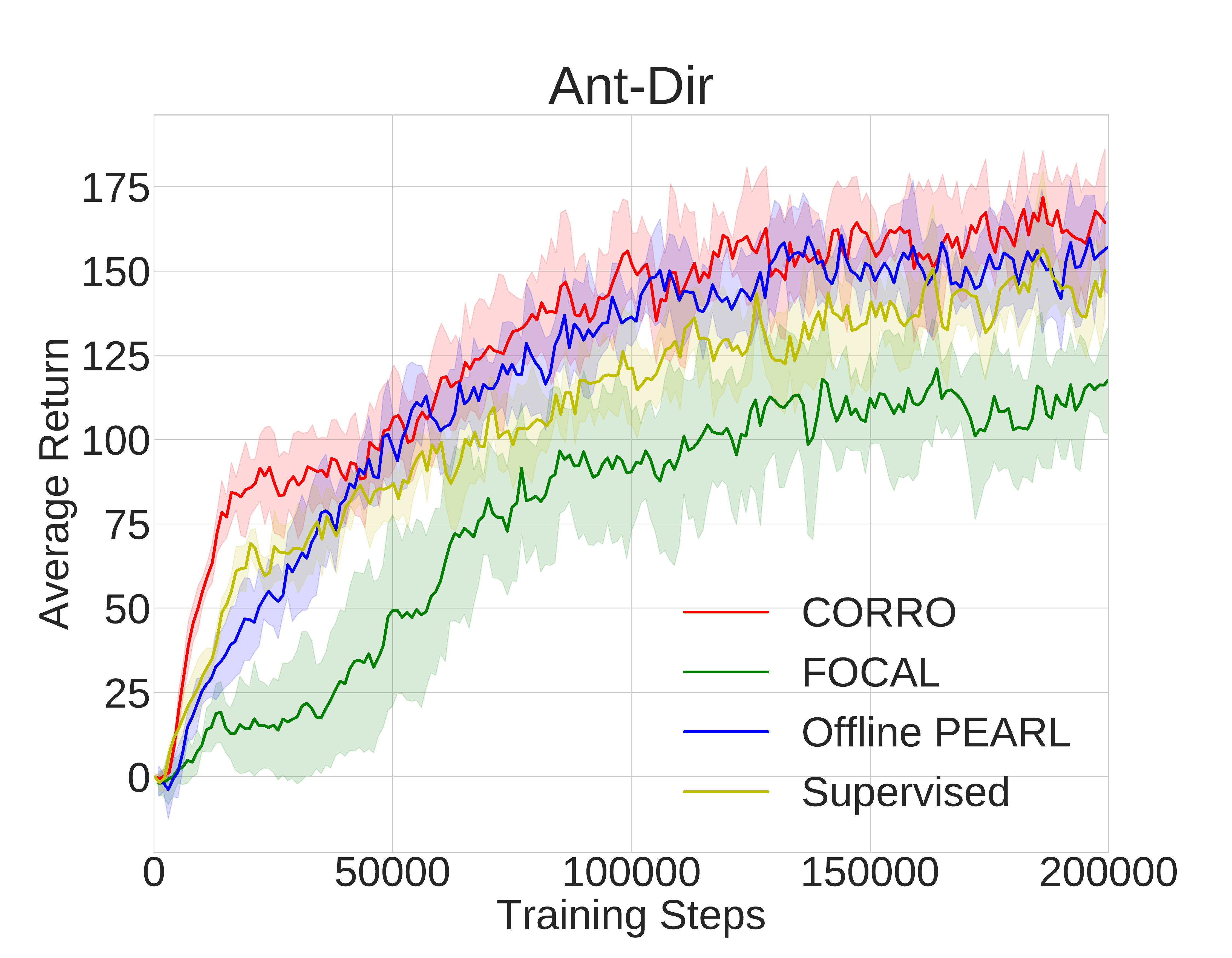}
	\end{minipage}
	\begin{minipage}[t]{0.33\textwidth}
		\centering
		\includegraphics[scale=0.128, trim={0cm, 0cm, 2cm, 0cm}]{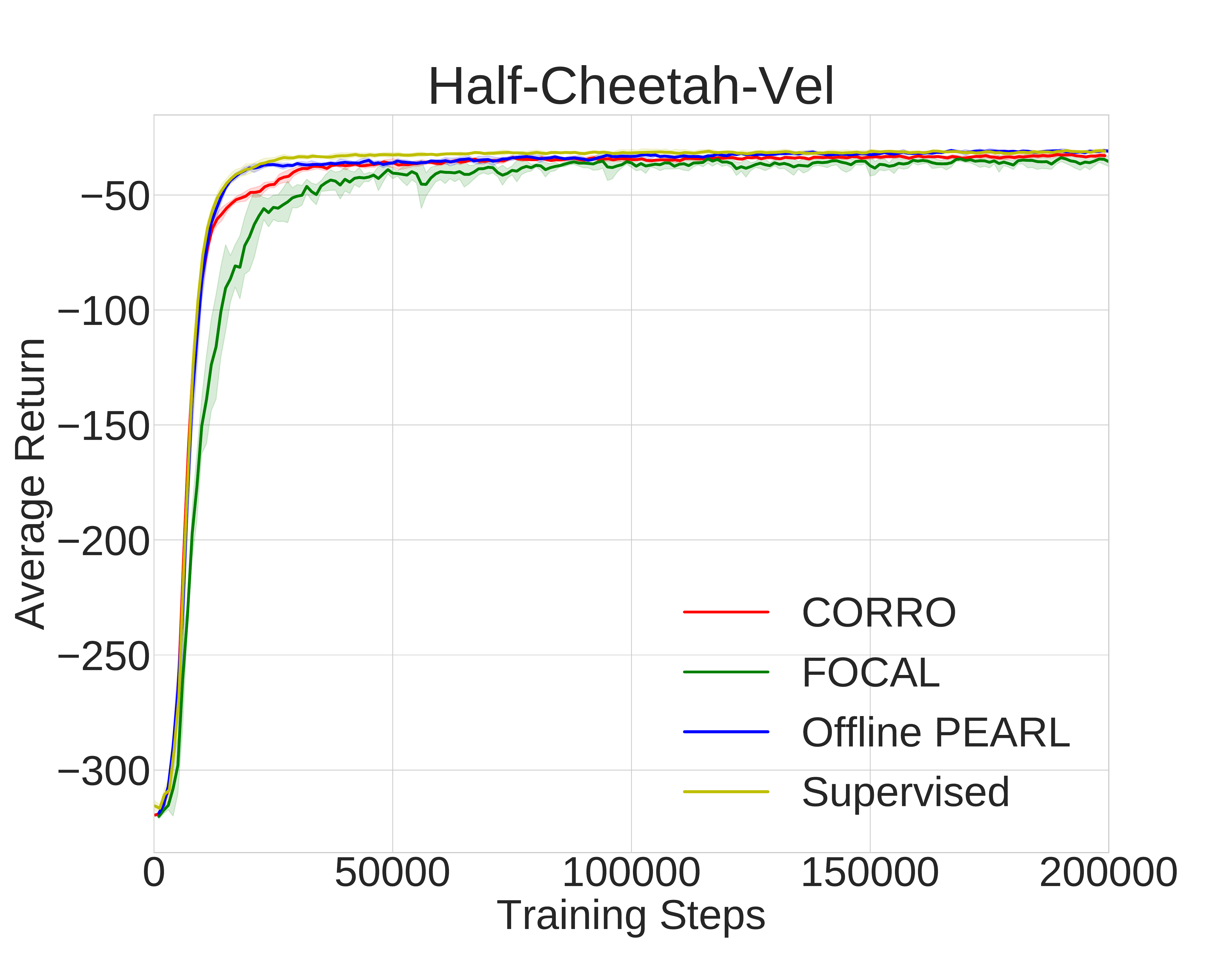}
	\end{minipage}
	\caption{Test returns of CORRO against the baselines in the environments with different reward functions: Point-Robot, Ant-Dir, and Half-Cheetah-Vel. The shaded region shows standard deviation across 5 seeds.
    }
    \label{fig:iid-reward}
\end{figure*}

\textbf{Reward Randomization}: In the cases where the overlap of state-action pairs between tasks is small, CVAE can collapse almost to a deterministic prediction model and the diversity in negative pairs decreases. When tasks only differ in reward functions, similar to data augmentation strategies in image domains, we can generate negative pairs via adding random noise to the reward. We have $r^*=r+\nu$, where the perturbation $\nu$ follows a noise distribution $p(\nu)$. Though reward randomization cannot approximate the true distribution of negative pairs, it provides an infinitely large space to generate diverse rewards, making contrastive learning robust.


\subsection{Algorithm Summary}
We summarize our training method in Algorithm \ref{alg:train}. 
The test process is shown in Algorithm \ref{alg:test}.
The code for our work is available at \href{https://github.com/PKU-AI-Edge/CORRO}{https://github.com/PKU-AI-Edge/CORRO}.

\section{Experiments}

In experiments, we aim to demonstrate: (1) The performance of CORRO on tasks adaptation in diverse task distributions and offline datasets; (2) The robustness of CORRO on task inference; and (3) How to choose the strategy of negative pairs generation.

\subsection{Experimental Settings}
We adopt a simple 2D environment and multi-task MuJoCo benchmarks to evaluate our method.

\textbf{Point-Robot} is a 2D navigation environment introduced in \citet{pearl}. Starting from the initial point, the agent should navigate to the goal location. Tasks differ in \textit{reward functions}, which describe the goal position. The goal positions are uniformly distributed in a square.

\textbf{Half-Cheetah-Vel, Ant-Dir} are multi-task MuJoCo benchmarks where tasks differ in \textit{reward functions}. In Half-Cheetah-Vel, the task is specified by the target velocity of the agent. The distribution of target velocity is $U[0,v_{\max}]$. In Ant-Dir, the task is specified by the goal direction of the agent's motion. The distribution of goal direction is $U[0,2\pi]$. These benchmarks are also used in \citet{macaw, focal, varibad}.

\textbf{Walker-Param, Hopper-Param} are multi-task MuJoCo benchmarks where tasks differ in \textit{transition dynamics}. For each task, the physical parameters of body mass, inertia, damping, and friction are randomized. The agent should adapt to the varying environment dynamics to accomplish the task. Previous works \cite{macaw, pearl} also adopt these benchmarks.

For each environment, 20 training tasks and 20 testing tasks are sampled from the task distribution. On each task, we use SAC \cite{sac} to train a single-task policy independently. The replay buffers are collected to be the offline datasets. During training, the context is a contiguous section of 200 transition tuples sampled from the offline dataset of the corresponding task. More details about experimental settings are available in Appendix~\ref{app:exp}.


\begin{figure}[!t]
\centering
\includegraphics[scale=0.105, trim={5cm, 1cm, 5.5cm, 3cm}, clip]{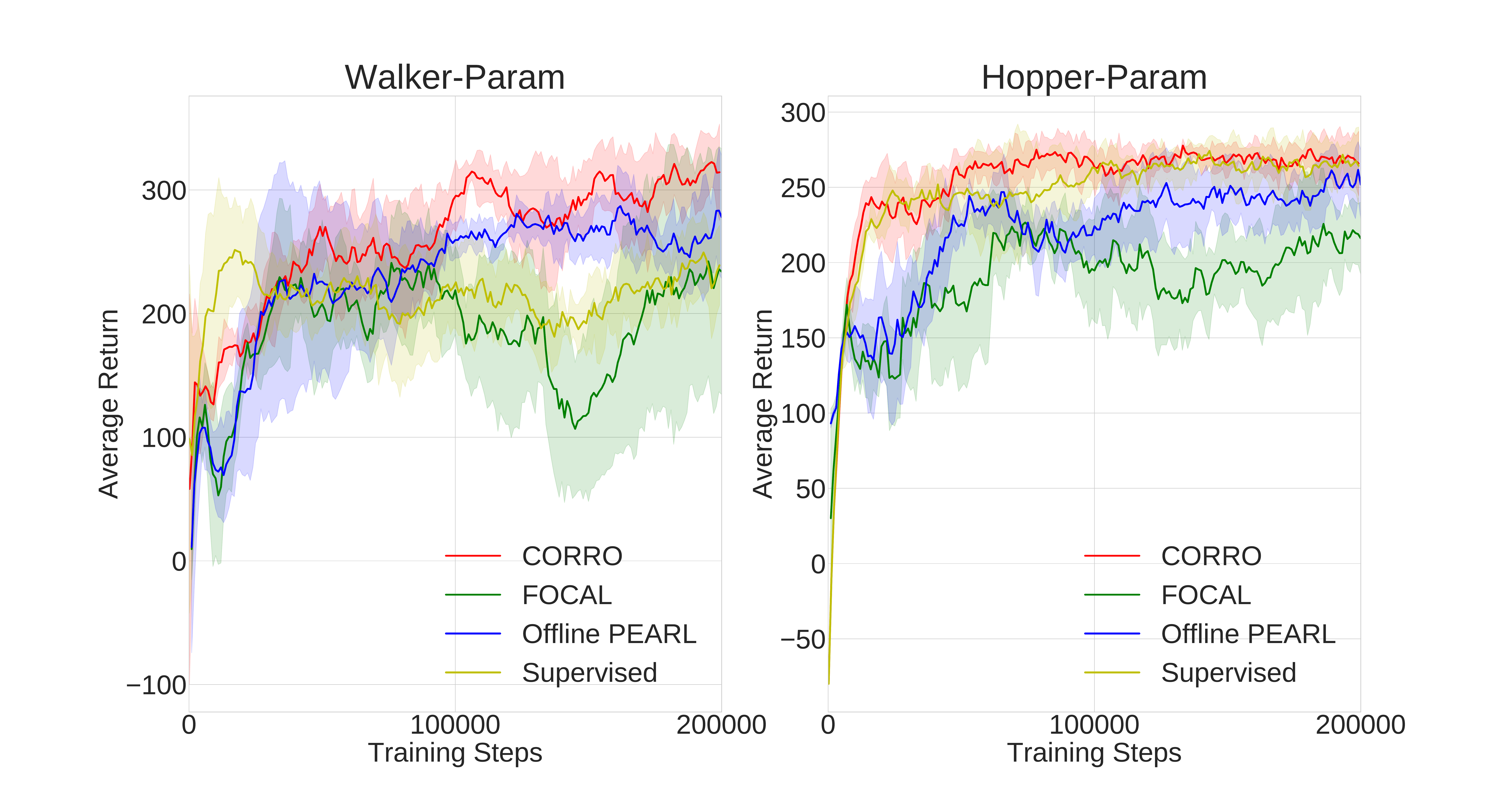}
\vspace{-0.1cm}
\caption{Test returns of CORRO against the baselines in the environments with different transition dynamics: Walker-Param and Hopper-Param. The shaded region shows standard deviation across 5 seeds.}
\label{fig:iid-transition}
\end{figure}

\newcolumntype{L}[1]{>{\raggedright\arraybackslash}p{#1}}
\newcolumntype{C}[1]{>{\centering\arraybackslash}p{#1}}
\newcolumntype{R}[1]{>{\raggedleft\arraybackslash}p{#1}}
\newcommand{\tabincell}[2]{\begin{tabular}{@{}#1@{}}#2\end{tabular}}

\renewcommand\tabcolsep{5pt}
\begin{table*}[!t]
\renewcommand\arraystretch{1.2}
\caption{Average test returns of CORRO against baselines with different types of context exploration policy. \textit{IID} means the context distribution is the same to the training dataset, while \textit{OOD} means the context is out-of-distribution. }
\label{table:ood_test}
\vspace{0.2cm}
\centering
\begin{small}
\begin{tabular}{@{}ccccccccc@{}}
\toprule
\multicolumn{1}{c}{\multirow{2}{*}{\textbf{Environment}}} & \multicolumn{2}{c}{\textbf{Supervised.}} & \multicolumn{2}{c}{\textbf{Offline PEARL}} & \multicolumn{2}{c}{\textbf{FOCAL}} & \multicolumn{2}{c}{\textbf{CORRO}}  \\ \cmidrule(lr){2-9} 
\multicolumn{1}{c}{}  & \textit{IID} & \textit{OOD} &  \textit{IID} & \textit{OOD} &  \textit{IID} & \textit{OOD} &  \textit{IID} & \textit{OOD} \\ \midrule\midrule

\textbf{\tabincell{c}{{Point-Robot}}}
& \textbf{-4.89}\scriptsize{$\pm 0.10$} & \textbf{-5.84}\scriptsize{$\pm 0.14$}
& -5.4\scriptsize{$\pm 0.17$} & -6.74\scriptsize{$\pm 0.19$} 
& -6.06\scriptsize{$\pm 0.42$} & -7.34\scriptsize{$\pm 0.20$} 
& \textbf{-5.19}\scriptsize{$\pm 0.05$} & \textbf{-6.39}\scriptsize{$\pm 0.05$}  \\

\textbf{\tabincell{c}{{Ant-Dir}}} 
& {136}\scriptsize{$\pm 17.6$} & {131.7}\scriptsize{$\pm 11.4$} 
& 155.4\scriptsize{$\pm 24.4$} & {141.5}\scriptsize{$\pm 11.3$} 
& {109.8}\scriptsize{$\pm 12.8$} & {53.5}\scriptsize{$\pm 16.4$} 
& \textbf{156.8}\scriptsize{$\pm 35.2$} & \textbf{154.7}\scriptsize{$\pm 25.8$}  \\ 

\textbf{\tabincell{c}{{Half-Cheetah-Vel}}} 
& \textbf{-31.6}\scriptsize{$\pm 0.7$} & \textbf{-32.1}\scriptsize{$\pm 0.9$} 
& \textbf{-31.2}\scriptsize{$\pm 0.5$} & -242.7\scriptsize{$\pm 6.0$} 
& -38.0\scriptsize{$\pm 4.0$} & -204.1\scriptsize{$\pm 9.5$} 
& -33.7\scriptsize{$\pm 1.1$} & \textbf{-89.7}\scriptsize{$\pm 7.4$} \\ 
\midrule

\textbf{\tabincell{c}{{Walker-Param}}} 
& 232.7\scriptsize{$\pm 29.2$} & {221.2}\scriptsize{$\pm 43.4$} 
& 259.1\scriptsize{$\pm 48.2$} & 254.7\scriptsize{$\pm 35.8$} 
& 225.4\scriptsize{$\pm 56.4$} & 193.3\scriptsize{$\pm 151.5$}  
& \textbf{301.5}\scriptsize{$\pm 37.9$} & \textbf{284.0}\scriptsize{$\pm 19.3$}  
\\ 

\textbf{\tabincell{c}{{Hopper-Param}}} 
& \textbf{269.2}\scriptsize{$\pm 20.3$} & 251.9\scriptsize{$\pm 28.8$} 
& 244.0\scriptsize{$\pm 18.5$} & 236.6\scriptsize{$\pm 18.5 $} 
& 195.6\scriptsize{$\pm 62.3$} & 199.7\scriptsize{$\pm 51.9$}  
& \textbf{267.6}\scriptsize{$\pm 25.6$} & \textbf{268.0}\scriptsize{$\pm 13.8$}
\\ \bottomrule 

\end{tabular}
\end{small}
\end{table*}

To evaluate the performance of our method, we introduce several representative context-based fully-offline methods to compare with.

\textbf{Offline PEARL} is a direct offline extension of PEARL \cite{pearl}. The context encoder is a permutation invariant MLP, updated together with policy and Q-function using offline RL algorithms. No additional terms are used for task representation learning. This baseline is introduced in \cite{mbml, mbomrl, focal}.

\textbf{FOCAL} \cite{focal} uses metric learning to train the context encoder. Positive and negative pairs are trajectories sampled from the same and the different tasks, respectively. 
Note that while the original FOCAL uses an offline RL method BRAC \cite{behavior-regularize}, we implement FOCAL with SAC to better study the task representation learning problem.

\textbf{Supervised Task Learning} assumes that \textit{the ground truth task descriptions} can be accessed during training. Based on PEARL, we add $L_2$ loss to supervise the context encoder. We introduce this supervised learning method as a strong baseline. Note that the ground truth task labels are usually \textit{unavailable} in meta-RL settings.

For fairness, we replace the policy learning methods of baselines and CORRO with SAC. The hyperparameters for policy learning are fixed across different methods. All the experiments are conducted over 5 different random seeds. All hyperparameters are available in Appendix~\ref{app:hyper}.

\subsection{Tasks Adaptation Performance}
\label{sec:iid_test}

To evaluate the performance on tasks adaptation, we use tasks in the test set, sample one trajectory in the pretrained replay buffer of each task to be the context, then test the agent in the environment conditioned on the context. The performance is measured by the average return over all the test tasks. In this experiment, there is \textit{no distributional shift of contexts} between training and test, since they are generated by the single-task policies learned with the same algorithm.

Figure \ref{fig:iid-reward} shows the test results in the environments with different reward functions. In Point-Robot, CORRO and the supervised baseline outperform others. In Ant-Dir, CORRO outperforms all the baselines including the supervised learning method. Although the value of return in Point-Robot varies a little, we have to mention that the return in this environment is insensitive to the task performance: the mean return of an optimal policy is around -5, while the mean return of a random policy is around -9. Benefited from the pretrained task encoder, CORRO shows great learning efficiency, and reaches high test returns after 20k steps of offline training. Though in Half-Cheetah-Vel we cannot distinguish the performance of different methods, we will demonstrate their difference when context distribution shifts exist in Section \ref{sec:ood_test} and the difference on the learned representations in Section \ref{sec:vis}.

Figure \ref{fig:iid-transition} demonstrates the results in the environments with varying transition dynamics. In both Walker-Param and Hopper-Param, CORRO outperforms the baselines on task adaptation returns. Further, CORRO is more stable on policy improvement, while the performance of FOCAL sometimes degenerates during the offline training.

\subsection{Robust Task Inference}
\label{sec:ood_test}

A robust task encoder should capture a compact representation of rewards and transition dynamics from the context, eliminating the information of the behavior policy. Conditioned on the context collected by an arbitrary behavior policy, the agent performs accurate task adaptation and reaches high returns. To measure the robustness, we design experiments of out-of-distribution (\textit{OOD}) test. Denote the saved checkpoints of data collection policies for all the tasks during different training periods as $\{\pi_{\beta}^{i,t}\}_{i,t}$, where $i$ is the task and $t$ is the training epoch. For each test task $M_j^{\text{test}}$, we sample $\pi_{\beta}^{i,t}$ as the behavior policy to collect a context rollout, and test the agent conditioned on the context. Because $\pi_{\beta}^{i,t}$ is a policy trained on an arbitrary task, the context distribution is never seen during meta training. Then, we sample $S$ behavior policies, measure the \textit{OOD} test performance as $\frac{1}{NS}\sum_{k=1}^{S}\sum_{j=1}^N{J(M_j^{\text{test}}, \pi_{\beta}^{i_k,t_k})}$, where $J(M,\pi)$ means the average return in task $M$ with context collection policy $\pi$. Accordingly, we denote the standard in-distribution test in Section \ref{sec:iid_test} as \textit{IID} test.

Table \ref{table:ood_test} summarizes the \textit{IID} and \textit{OOD} test results of different methods. All the test models are from the last training epoch. While the performance of all the methods degenerates from \textit{IID} to \textit{OOD}, CORRO outperforms Offline PEARL and FOCAL in \textit{OOD} test by a large margin. In Half-Cheetah-Vel, Offline PEARL and FOCAL almost fail in task inference with distributional shift. The slight performance degeneration shows the robustness of CORRO. Note that although the supervised task learning method does not degenerate in Half-Cheetah-Vel, it uses the true task labels to learn the relation between contexts and tasks, which is not allowed in common meta-RL settings.

\subsection{Latent Space Visualization}
\label{sec:vis}

\begin{figure}[!t]
    \begin{center}
        \includegraphics[scale=0.5, trim={0cm, 3.8cm, 19cm, 0cm}, clip]{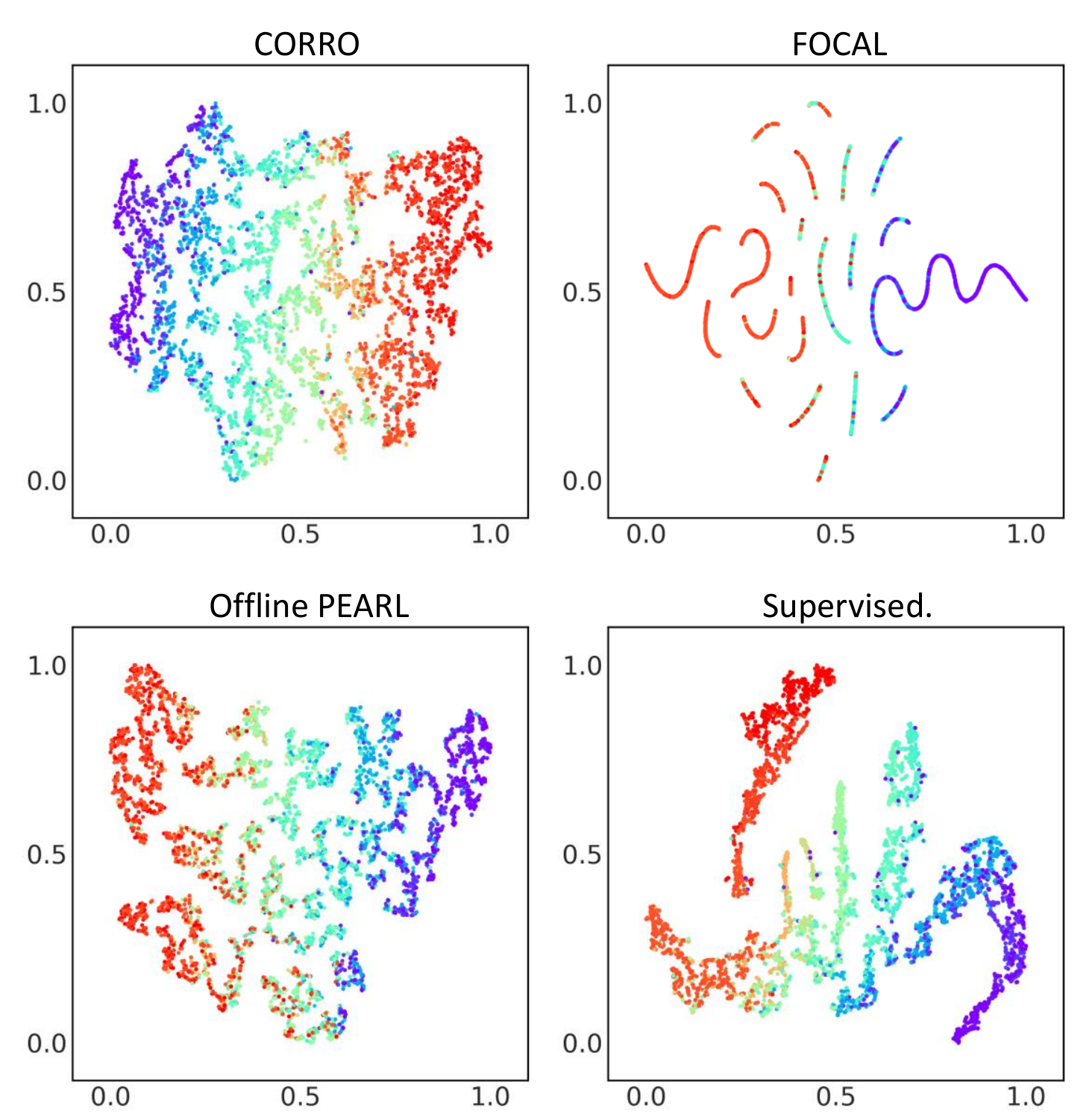}
    \end{center}
    \vspace{-0.2cm}
    \caption{The 2D projection of the learned task representation space in Half-Cheetah-Vel. Points are uniformly sampled from test tasks. Tasks of velocities from 0 to 3 are mapped to rainbow colors, from red to purple.
    }
    \label{fig:embedding}
\end{figure}

To qualitatively analyze the learned task representation space, we visualize the task representations by projecting the embedding vectors into 2D space via t-SNE \cite{t-SNE}. For each test task, we sample 200 transition tuples in the replay buffer to visualize. As shown in Figure \ref{fig:embedding}, compared to the baselines, CORRO distinguishes transitions from different tasks better. CORRO represents the task space of Half-Cheetah-Vel into a 1D manifold, where the distance between latent vectors is correlated to the difference between their target velocities, as shown in the color of the points.

\subsection{Ablation: Negative Pairs Generation}

\begin{table}[!t]
\renewcommand\arraystretch{1.2}
\begin{center}
\caption{Ablation study on negative pairs generation methods in Half-Cheetah-Vel environment (the upper four rows) and Point-Robot environment (the lower four rows).}
\label{table:ablation}
\vspace{0.2cm}
\small
\begin{tabular}{@{}cccc@{}}
\toprule
\textbf{Method} & \textbf{Contrastive Loss} & \textbf{\textit{IID} Return} & \textbf{\textit{OOD} Return}  \\ \midrule\midrule

\textbf{\tabincell{c}{{Generative}}} & 0.07 &  \textbf{-33.7}\scriptsize{$\pm 1.1$} & {-89.7}\scriptsize{$\pm 7.4 $}   \\ 

\textbf{\tabincell{c}{{Randomize}}} & 0.83 &  -34.3\scriptsize{$\pm 1.5$} & \textbf{-84.5}\scriptsize{$\pm 1.3 $}   \\ 

\textbf{\tabincell{c}{{Relabeling}}} & 0.04 & -40.8\scriptsize{$\pm 1.5$} & -245.3\scriptsize{$\pm 12.9$} \\ 

\textbf{\tabincell{c}{{None}}} & 1.20 &   -34.1\scriptsize{$\pm 2.4$} & {-97.6}\scriptsize{$\pm 3.1 $}   \\ \midrule\midrule

\textbf{\tabincell{c}{{Generative}}} & 2.83 &  -9.41\scriptsize{$\pm 0.42$} & -9.42\scriptsize{$\pm 0.42 $}   \\ 

\textbf{\tabincell{c}{{Randomize}}} & {0.54} &  \textbf{-5.19}\scriptsize{$\pm 0.05$} & \textbf{-6.39}\scriptsize{$\pm  0.05$}   \\ 

\textbf{\tabincell{c}{{Relabeling}}} & 0.04 & -9.22\scriptsize{$\pm 0.24$} & -9.27\scriptsize{$\pm 0.22$} \\ 

\textbf{\tabincell{c}{{None}}} & 1.46 &   -5.24\scriptsize{$\pm 0.27$} & {-6.52}\scriptsize{$\pm 0.08$}   \\ \bottomrule

\end{tabular}
\end{center}
\end{table}

\renewcommand\tabcolsep{16pt}
\begin{table*}[!t]
\renewcommand\arraystretch{1.2}
\caption{Average test returns of CORRO against baselines with a random exploration policy for context collection. }
\label{table:random_test}
\vspace{0.2cm}
\centering
\begin{small}
\begin{tabular}{ccccc}
\toprule
\textbf{Environment} & \textbf{Supervised.} & \textbf{Offline PEARL} & \textbf{FOCAL} & \textbf{CORRO}  \\ \midrule\midrule

\textbf{\tabincell{c}{{Point-Robot}}}
& -5.32\scriptsize{$\pm 0.20$} & -7.06\scriptsize{$\pm 0.99$}
& -8.64\scriptsize{$\pm 0.26$} & \textbf{-5.59}\scriptsize{$\pm 0.57$}  \\

\textbf{\tabincell{c}{{Ant-Dir}}} 
& 149.8\scriptsize{$\pm 20.5$} & {148.4}\scriptsize{$\pm 35.3$} 
& 89.8\scriptsize{$\pm 8.7$} & \textbf{163.0}\scriptsize{$\pm 35.8$}  \\ 

\textbf{\tabincell{c}{{Half-Cheetah-Vel}}} 
& -37.6\scriptsize{$\pm 0.8$} & \textbf{-35.4}\scriptsize{$\pm 1.8$} 
& {-41.6}\scriptsize{$\pm 3.3$} & -42.9\scriptsize{$\pm 0.7$}  \\ 
\midrule

\textbf{\tabincell{c}{{Walker-Param}}} 
& 221.7\scriptsize{$\pm 91.1$} & {276.6}\scriptsize{$\pm 37.7$} 
& 245.6\scriptsize{$\pm 67.8$} & \textbf{300.5}\scriptsize{$\pm 34.2$} \\ 

\textbf{\tabincell{c}{{Hopper-Param}}} 
& 253.5\scriptsize{$\pm 21.2$} & 245.9\scriptsize{$\pm 18.9$} 
& 203.6\scriptsize{$\pm 46.6$} & \textbf{273.3}\scriptsize{$\pm 3.9 $} 
\\ \bottomrule 

\end{tabular}
\end{small}
\end{table*}

Negative pairs generation is a key component of CORRO. Generating more diverse negative samples can make contrastive learning robust and efficient, while the distribution of negative pairs should also be close to the true distribution as described in Section \ref{sec:contrastive}. Along with our design of generative modeling and reward randomization, we also introduce \textbf{Relabeling} and \textbf{None} method. \textbf{Relabeling} is to separately learn a reward model and a transition model on each offline training dataset. The negative pair is generated by first sampling a model, then relabeling $(r,s')$ in the transition tuple. \textbf{None} is a straightforward method where we simply construct negative pairs with transition tuples from different tasks and $(s,a)$ in the transitions can be different. 

As shown in Table \ref{table:ablation}, in Half-Cheetah-Vel, our methods using generative modeling and reward randomization outperform None and Relabeling in \textit{OOD} test, due to better approximating the distribution of negative pairs. 

In Half-Cheetah-Vel, we notice that the None method achieves the average return of -97.6 in \textit{OOD} test and outperforms FOCAL, of which the average return is -204.1 according to Table \ref{table:ood_test}. 
They have two main differences: 1. samples for representation learning are
transition tuples in ‘None’, but are trajectories in FOCAL; 2. the contrastive loss. To find which affects the OOD generalization more, we re-implement FOCAL with our InfoNCE loss. It achieves an average return of -41.9 in the IID test and -208.2 in the OOD test, which is close to FOCAL’s performance. 
Thus, we argue that focusing on transition tuples rather than whole trajectories contributes to OOD generalization.
Our approach naturally discourages the task encoder to extract the feature of behavior policy from the trajectory sequence.

While generative modeling and reward randomization has similar performance in Half-Cheetah-Vel, the latter solves Point-Robot better. Because in Point-Robot, behavior policies in different tasks move in different directions, making state-action distributions non-overlapping. CVAE collapses into predicting the transition of the specific task where the state-action pair appears, and cannot provide diverse negative pairs to support contrastive learning. 

The column of contrastive loss shows that the two proposed methods generate diverse and effective negative pairs in the right circumstances, improving the learning of task encoder. Though the relabeling method has the best sample diversity, it underperforms other methods in task adaptation due to the inaccurate predictions on unseen data.

\subsection{Adaptation with an Exploration Policy}

Context exploration is a crucial part of meta-RL. Existing works either explore with the meta policy before adaptation \cite{varibad, maml}, or learn a separate exploration policy \cite{explore-then-execute, explore-context-meta-rl}. To test the task adaptation performance of CORRO with an exploration policy, we adopt a uniformly random policy to collect the context. As shown in Table \ref{table:random_test}, CORRO and the supervised baseline outperform others in Point-Robot. In Half-Cheetah-Vel, all the methods have great adaptation performance compared with the results in Table \ref{table:ood_test}. In the other three environments, CORRO outperforms all the baselines.

\section{Conclusion}
In this paper, we address the task representation learning problem in the fully offline setting of meta-RL. We propose a contrastive learning framework over transition tuples and several methods for negative pairs generation. In experiments, we demonstrate how our method outperforms prior methods in diverse environments and task distributions. We introduce \textit{OOD} test to quantify the superior robustness of our method and ablate the choice of negative pairs generation. Though, our study has not considered to learn an exploration policy for context collection, and to perform moderate environmental interactions when training data is extremely limited. We leave these directions as future work.

\section*{Acknowledgements}
We would like to thank the anonymous reviewers for their useful comments to improve our work. This work is supported in part by NSF China under grant 61872009.

\bibliography{ref2}
\bibliographystyle{icml2022}

\newpage
\appendix
\onecolumn


\section{Contrastive Learning Objective}
\label{appendix:thm} 

In this section, we first introduce a lemma. Then we give a proof of the main theorem in the text.

\begin{lemma}
\label{lem:1}
Given $M\sim P(M)$ with reward $R$ and transition $T$, $(s,a)$ follows some distribution, $x=(s,a,r,s'), r=R(s,a), s'\sim T(s'|s,a), z\sim P(z|x)$, then we have
\begin{equation*}
    \frac{P(M|z)}{P(M)} = \mathbb{E}_{x}{\left[\frac{P(z|x)}{P(z)}\right]}.
\end{equation*}
\end{lemma}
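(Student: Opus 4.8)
The plan is to reduce everything to Bayes' rule together with the conditional-independence structure implicit in the data-generating process. First I would observe that the lemma fixes a single task $M$ and then generates $x=(s,a,r,s')$ from it (by drawing $(s,a)$, setting $r=R(s,a)$ and sampling $s'\sim T(s'|s,a)$), and only afterwards samples $z\sim P(z|x)$. This means the latent code $z$ depends on the task $M$ \emph{only through} the transition tuple $x$; formally, $M\to x\to z$ is a Markov chain, so that $P(z\mid x,M)=P(z\mid x)$. Accordingly, the expectation $\mathbb{E}_x$ in the statement is to be read as the conditional expectation over $x$ drawn from $M$, i.e.\ with respect to $P(x\mid M)$.

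Given this, I would start by marginalizing $x$ out of the conditional law of $z$ given $M$:
\[
P(z\mid M)=\int P(z\mid x,M)\,P(x\mid M)\,dx=\int P(z\mid x)\,P(x\mid M)\,dx=\mathbb{E}_{x\mid M}\!\left[P(z\mid x)\right],
\]
where the middle equality is exactly the Markov property. Dividing both sides by $P(z)$ gives $\tfrac{P(z\mid M)}{P(z)}=\mathbb{E}_{x\mid M}\!\left[\tfrac{P(z\mid x)}{P(z)}\right]$.

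Finally I would convert the left-hand side into the form claimed in the lemma by a single application of Bayes' rule, $P(M\mid z)\,P(z)=P(z\mid M)\,P(M)$, which rearranges to $\tfrac{P(M\mid z)}{P(M)}=\tfrac{P(z\mid M)}{P(z)}$. Chaining this with the previous display yields $\tfrac{P(M\mid z)}{P(M)}=\mathbb{E}_{x}\!\left[\tfrac{P(z\mid x)}{P(z)}\right]$, as desired.

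The calculation itself is routine; the only genuine subtlety — and the step I would be most careful to state explicitly — is the conditional independence $P(z\mid x,M)=P(z\mid x)$, i.e.\ justifying that the encoder sees $x$ and nothing else about the task, so that $M$ enters $z$ purely through the sampled transition. Everything else is marginalization and Bayes' rule.
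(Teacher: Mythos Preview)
Your proof is correct and follows essentially the same route as the paper: Bayes' rule to rewrite $\tfrac{P(M\mid z)}{P(M)}$ as $\tfrac{P(z\mid M)}{P(z)}$, followed by marginalizing $x$ out of $P(z\mid M)$ using the Markov structure $P(z\mid x,M)=P(z\mid x)$. The only difference is that you apply these two steps in the opposite order and make the conditional-independence assumption explicit, which the paper leaves implicit.
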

\begin{proof} 
\begin{equation*}
    \begin{aligned}
    \frac{P(M|z)}{P(M)} &= \frac{P(z|M)}{P(z)} \\
    &= \int_x{ \frac{P(x|M)P(z|x)}{P(z)} dx} \\
    &= \mathbb{E}_{x}{\left[\frac{P(z|x)}{P(z)}\right]}
    \end{aligned}
\end{equation*}
\end{proof}


\begin{theorem}
\label{thm:main}
Let $\mathcal{M}$ be a set of tasks following the task distribution, $|\mathcal{M}|=N$. $M\in\mathcal{M}$ is the first task with reward and transition $R,T$. Let $x=(s,a,r,s'), z\sim P(z|x), h(x,z)=\frac{P(z|x)}{P(z)}$, where $(s,a)$ follows an arbitrary distribution, $r=R(s,a), s'\sim T(s'|s,a)$. For any task $M^*\in\mathcal{M}$ with reward and transition $R^*,T^*$, denote $x^*=(s,a,r^*,{s^*}')$ as a transition tuple generated in $M^*$ conditioned on $(s,a)$, where $r^*=R^*(s,a), {s^*}'\sim T^*(s'|s,a)$. Then we have 
\begin{equation*}
    I(z;M)-\log(N)\ge \mathbb{E}_{\mathcal{M}, x, z}{\left[\log(\frac{h(x,z)}{\sum_{M^*\in\mathcal{M}}{h(x^*,z)}})\right]}.
\end{equation*}
\end{theorem}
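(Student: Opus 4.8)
The plan is to follow the noise-contrastive estimation argument of \citet{cpc}, using Lemma~\ref{lem:1} as the bridge between the transition-level density ratio $h(x,z)=P(z|x)/P(z)$ that appears in the objective and the task-level ratio $P(M|z)/P(M)$ that defines $I(z;M)$. The reading I would adopt is that $h(x,z)$ is the optimal contrastive critic for the pair $(x,z)$, while Lemma~\ref{lem:1} shows that averaging it over the transitions of a fixed task, $\mathbb{E}_{x|M}[h(x,z)]=P(M|z)/P(M)$, recovers exactly the score needed for the task-level classification. This is what lets an objective written over individual transition tuples control $I(z;M)$, and it is the step specific to this setting rather than to generic InfoNCE.

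Concretely, I would first isolate the positive term from the denominator. Since the $M^*=M$ summand equals $h(x,z)$, dividing through gives
\begin{equation*}
\mathbb{E}\left[\log\frac{h(x,z)}{\sum_{M^*\in\mathcal{M}}h(x^*,z)}\right]
= -\,\mathbb{E}\left[\log\left(1+\frac{1}{h(x,z)}\sum_{M^*\neq M}h(x^*,z)\right)\right].
\end{equation*}
Next I would take the expectation over the $N-1$ negative tasks and their generated transitions, all conditioned on the shared $(s,a)$ and on $z$. Because each negative $x^*$ is drawn from its task $M^*\sim P(M)$ with the same $(s,a)$, Lemma~\ref{lem:1} lets me evaluate $\mathbb{E}[h(x^*,z)]$ in terms of the task ratio $P(M^*|z)/P(M^*)$, so the inner sum concentrates around $(N-1)$ times an average density ratio. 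Substituting this estimate and using $1+\tfrac{P(M)}{P(M|z)}(N-1)\ge N\,\tfrac{P(M)}{P(M|z)}$ inside the increasing logarithm collapses the expression to $-\,\mathbb{E}[\log\tfrac{P(M|z)}{P(M)}]+\log N=-I(z;M)+\log N$, which rearranges to the claimed bound $I(z;M)-\log N\ge\mathbb{E}[\log\frac{h(x,z)}{\sum_{M^*}h(x^*,z)}]$.

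The hard part will be making the middle step rigorous rather than asymptotic. Replacing $\sum_{M^*\neq M}h(x^*,z)$ by its conditional expectation is an application of Jensen's inequality whose direction must be checked carefully: the logarithm is concave, so one has to route the bound through the convexity of $t\mapsto-\log(1+ct)$ (or the tangent-line inequality $\log t\le t-1$) and verify that the shared-$(s,a)$ construction makes the marginal law of each negative $x^*$ coincide with that of the positive $x$, which is what guarantees $\mathbb{E}[h(x^*,z)]$ integrates correctly. I would also need to confirm that replacing the per-task score $P(M^*|z)/P(M^*)$ by a single-sample $h(x^*,z)$ is harmless for a lower bound, again via Lemma~\ref{lem:1} and Jensen. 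These exchange-of-expectation and inequality-direction details, not any deeper idea, are where the real work lies.
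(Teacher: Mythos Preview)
Your plan uses the same ingredients as the paper---Lemma~\ref{lem:1} as the bridge and the InfoNCE scaffolding of \citet{cpc}---but the order in which you deploy them differs, and the order matters for exactly the Jensen direction you flag. You start from the objective and try to replace the negative sum $\sum_{M^*\ne M}h(x^*,z)$ by its expectation $(N-1)$; since $Y\mapsto-\log(1+cY)$ is convex, Jensen gives a \emph{lower} bound on the objective when what you need is an upper bound. You also jump from the transition-level quantity $1/h(x,z)$ to the task-level ratio $P(M)/P(M|z)$ inside the $1+\cdot(N-1)$ expression without a mechanism: Lemma~\ref{lem:1} only gives $\mathbb{E}_{x\mid M}[h(x,z)]=P(M|z)/P(M)$, not a pointwise identity or its reciprocal.

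The paper resolves both issues by running the chain \emph{forward} from $I(z;M)-\log N$ and applying Lemma~\ref{lem:1} to the \emph{positive} task first: write $\log\tfrac{P(M|z)}{P(M)}=\log\mathbb{E}_{x}[h(x,z)]$, then use Jensen on the concave logarithm to pull the expectation over $x$ inside, giving $\mathbb{E}_{x}[\log h(x,z)]-\log N$. After the algebraic step $\tfrac{h}{N}\ge\tfrac{h}{h+(N-1)}$ this becomes $\mathbb{E}\bigl[-\log\bigl(1+(N-1)/h(x,z)\bigr)\bigr]$, and only then are the negatives introduced by multiplying the $(N-1)$ factor by $\mathbb{E}_{M^*}[h(x^*,z)]=1$ and identifying that expectation with the sample sum---treated as an equality in the spirit of \citet{cpc} rather than as a further inequality. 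So the positive-side Jensen does the real work and goes the right way; the negatives enter as an identity at the end, which is precisely how the direction problem you anticipate is sidestepped.
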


\begin{proof} 
Using Lemma \ref{lem:1} and Jensen's inequality, we have
\begin{equation*}
    \begin{aligned}
    I(z;M)-\log(N) &= \mathbb{E}_{{z,\mathcal{M}}}{\log{\left[ \frac{p(M|z)}{p(M)} \right]}} - \log{N} \\
     &= \mathbb{E}_{{z,\mathcal{M}}}{\left[ \log{\mathbb{E}_{x}{\left[ \frac{p(z|x)}{p(z)} \right]}} \right]} - \log{N} \\
     &\ge \mathbb{E}_{{z,\mathcal{M}}} \mathbb{E}_{x}{\left[ \log\left(\frac{1}{\frac{p(z)}{p(z|x)}N}\right) \right]} \\
     &\ge \mathbb{E}_{z,{\mathcal{M}}} \mathbb{E}_{x}{\left[ \log\left(\frac{1}{1+\frac{p(z)}{p(z|x)}(N-1)}\right) \right]} \\
     &= \mathbb{E}_{z,{\mathcal{M}}} \mathbb{E}_{x}{\left[ -\log\left( 1+\frac{p(z)}{p(z|x)}(N-1) \right) \right]} \\
     &= \mathbb{E}_{{z,\mathcal{M}}} \mathbb{E}_{x}{\left[ -\log\left( 1+\frac{p(z)}{p(z|x)}(N-1) \mathbb{E}_{M^*\in {\mathcal{M}}\setminus\{M\}}{\left[\frac{p(z|x^*)}{p(z)}\right]} \right) \right]} \\
     &= \mathbb{E}_{{z,\mathcal{M}}} \mathbb{E}_{x}{\left[ -\log\left( 1+\frac{p(z)}{p(z|x)}\sum_{M^*\in {\mathcal{M}}\setminus\{M\}}{\frac{p(z|x^*)}{p(z)}} \right) \right]} \\
     &= \mathbb{E}_{\mathcal{M}, x, z}{\left[ \log\left( \frac{\frac{p(z|x)}{p(z)}}{\frac{p(z|x)}{p(z)} + \sum_{M^*\in {\mathcal{M}}\setminus\{M\}}{\frac{p(z|x^*)}{p(z)}}} \right) \right]} \\
     &= \mathbb{E}_{\mathcal{M}, x, z}{\left[ \log(\frac{h(x,z)}{\sum_{M^*\in {\mathcal{M}}}{h(x^*,z)}}) \right]}
    \end{aligned}
\end{equation*}
\end{proof}

\section{Environment Details}
\label{app:exp}
In this section, we present the details of the environments in our experiments.

\textbf{Point-Robot:} The start position is fixed at $(0,0)$ and the goal location $g$ is sampled from $U[-1,1]\times U[-1,1]$. The reward function is defined as $r_t=-\|s_t-g\|_2$, where $s_t$ is the current position. The maximal episode steps is set to 20.

\textbf{Ant-Dir:} The goal direction is sampled from $\theta\sim U[0,2\pi]$. The reward function is $r_t=v_x\cos{\theta}+v_y\sin{\theta}$, where $(v_x,v_y)$ is the horizontal velocity of the ant. The maximal episode steps is set to 200.

\textbf{Half-Cheetah-Vel:} The goal velocity is sampled from $v_g\sim U[0,3]$. The reward function is $r_t=-|v_t-v_g|-\frac{1}{2}\|a_t\|^2_2$, where $v_t$ is the current forward velocity of the agent and $a_t$ is the action. The maximal episode steps is set to 200.

\textbf{Walker-Param:} Transition dynamics are varied in body mass and frictions, described by 32 parameters. Each parameter is sampled by multiplying the default value with $1.5^\mu, \mu\sim U[-3,3]$. The reward function is $r_t=v_t-10^{-3}\cdot\|a_t\|^2_2+1$, where $v_t$ is the current forward velocity of the walker and $a_t$ is the action. The episode terminates when the height of the walker is less than 0.5. The maximal episode steps is also set to 200.

\textbf{Hopper-Param:} Transition dynamics are varied in body mass, inertia, damping and frictions, described by 41 parameters. Each parameter is sampled by multiplying the default value with $1.5^\mu, \mu\sim U[-3,3]$. The reward function is $r_t=v_t-10^{-3}\cdot\|a_t\|^2_2$, where $v_t$ is the current forward velocity of the hopper and $a_t$ is the action. The maximal episode steps is set to 200.

\section{Experimental Details}
\label{app:hyper}

In Table \ref{table:hyperparams1} and \ref{table:hyperparams2}, we list the important configurations and hyperparameters in the data collection and meta training phases that we used to produce the experimental results. 

\begin{table*}[h]
\renewcommand\arraystretch{1.2}
\caption{Configurations and hyperparameters used in dataset collection to produce all the experimental results. }
\label{table:hyperparams1}
\vspace{0.2cm}
\centering
\begin{small}
\begin{tabular}{@{}c|c|c|c|c|c@{}}
\toprule

Configutations & Point-Robot & Ant-Dir & Half-Cheetah-Vel & Walker-Param &  Hopper-Param  \\ \midrule\midrule

Dataset size & 2100 & 2e4 & 2e5 & 2e4 & 6e4 \\
Training steps & 1e3 & 4e4 & 1e5 & 6e5 & 2e5 \\
Batch size & 256 & 256 & 256 & 256 & 256 \\
Network width & 32 & 128 & 128 & 128 & 128 \\
Network depth & 3 & 3 & 3 & 3 & 3 \\
Learning rate & 3e-4 & 3e-4 & 3e-4 & 3e-4 & 3e-4 \\
\bottomrule 
\end{tabular}
\end{small}
\end{table*}

\begin{table*}[h]
\renewcommand\arraystretch{1.2}
\caption{Configurations and hyperparameters used in offline meta training to produce all the experimental results. }
\label{table:hyperparams2}
\vspace{0.2cm}
\centering
\begin{small}
\begin{tabular}{@{}c|c|c|c|c|c@{}}
\toprule

Configutations & Point-Robot & Ant-Dir & Half-Cheetah-Vel & Walker-Param &  Hopper-Param  \\ \midrule\midrule

Negative pairs & Randomize & Randomize & Generative &  Generative &  None \\
$p(\nu)$ & $\mathcal{N}(0,0.5)$ & $\mathcal{N}(0,0.5)$ & -- & -- & -- \\
Latent space dim & 5 & 5 & 5 & 32 & 40 \\
Task batch size & 16 & 16 & 16 & 16 & 16 \\
Training steps & 2e5 & 2e5 & 2e5 & 2e5 & 2e5 \\
RL batch size & 256 & 256 & 256 & 256 & 256 \\
Contrastive batch size & 64 & 64 & 64 & 64 & 64 \\
Negative pairs number & 16 & 16 & 16 & 16 & 16 \\
RL network width & 64 & 256 & 256 & 256 & 256 \\
RL network depth & 3 & 3 & 3 & 3 & 3 \\
Encoder width & 64 & 64 & 64 & 128 & 128 \\
Learning rate & 3e-4 & 3e-4 & 3e-4 & 3e-4 & 3e-4 \\

\bottomrule 
\end{tabular}
\end{small}
\end{table*}

\end{document}